\documentclass[sigconf]{acmart}
\AtBeginDocument{%
  }

\usepackage{algorithm}
\usepackage{algorithmic}
\usepackage{amsmath}
\usepackage{amsthm}  
\usepackage{xcolor}
\usepackage{subcaption}
\usepackage{booktabs}
\usepackage{multirow}
\usepackage{pifont}
\usepackage{caption}
\usepackage[labelformat=simple]{subcaption}
\usepackage{marginnote}
\usepackage[most]{tcolorbox} 
\usepackage{enumitem}
\usepackage{threeparttable}
\usepackage{makecell}
\usepackage{colortbl}


\newcounter{todoCounter} 

\newcommand{\stitle}[1]{\vspace{2mm}\noindent\textbf{#1}}

%
%

\theoremstyle{definition}
\newtheorem{definition}{Definition}[section]
\theoremstyle{plain}
\newtheorem{theorem}{Theorem}[section]

\setcopyright{acmlicensed}
\copyrightyear{2025}
\acmYear{2025}
\acmDOI{XXXXXXX.XXXXXXX}
\acmConference[Conference acronym 'XX]{Make sure to enter the correct
  conference title from your rights confirmation email}{June 03--05,
  2025}{Woodstock, NY}
\acmISBN{978-1-4503-XXXX-X/2025/06}




\begin{document}

\title{CORE: Contrastive Masked Feature Reconstruction on Graphs}

\author{Jianyuan Bo}
\affiliation{%
  \institution{Singapore Management University}
  \city{Singapore}
  \country{Singapore}}
\email{jybo.2020@phdcs.smu.edu.sg}

\author{Yuan Fang}
\affiliation{%
  \institution{Singapore Management University}
  \city{Singapore}
  \country{Singapore}}
\email{yfang@smu.edu.sg}



\begin{abstract}
In the rapidly evolving field of self-supervised learning on graphs, generative and contrastive methodologies have emerged as two dominant approaches. Our study focuses on \emph{masked feature reconstruction} (MFR), a generative technique where a model learns to restore the raw features of masked nodes in a self-supervised manner. We observe that both MFR and graph contrastive learning (GCL) aim to maximize agreement between similar elements. Building on this observation, we reveal a novel theoretical insight: under specific conditions, the objectives of MFR and node-level GCL converge, despite their distinct operational mechanisms. This theoretical connection suggests these approaches are complementary rather than fundamentally different, prompting us to explore their integration to enhance self-supervised learning on graphs. Our research presents Contrastive Masked Feature Reconstruction (CORE), a novel graph self-supervised learning framework that integrates contrastive learning into MFR. Specifically, we form positive pairs exclusively between the original and reconstructed features of masked nodes, encouraging the encoder to prioritize contextual information over the node's own features. Additionally, we leverage the masked nodes themselves as negative samples, combining MFR's reconstructive power with GCL's discriminative ability to better capture intrinsic graph structures. Empirically, our proposed framework CORE significantly outperforms MFR across node and graph classification tasks, demonstrating state-of-the-art results. In particular, CORE surpasses GraphMAE and GraphMAE2 by up to 2.80\% and 3.72\% on node classification tasks, and by up to 3.82\% and 3.76\% on graph classification tasks.
\end{abstract}



\begin{CCSXML}
<ccs2012>
   <concept>
       <concept_id>10010147.10010257.10010258.10010260</concept_id>
       <concept_desc>Computing methodologies~Unsupervised learning</concept_desc>
       <concept_significance>500</concept_significance>
       </concept>
 </ccs2012>
\end{CCSXML}

\ccsdesc[500]{Computing methodologies~Unsupervised learning}

\keywords{graph representation learning, self-supervised learning}


\maketitle

\section{Introduction}
Self-supervised learning (SSL) on graphs has become instrumental to graph representation learning. Typically, a graph neural network (GNN)~\cite{velivckovic2018graph} is trained on the inherent patterns within graphs without any labeled data. The learned representations can be subsequently utilized for various downstream tasks, such as node classification \cite{velickovic2019deep,zhu2020deep,thakoor2022large}, link prediction \cite{kipf2016variational,jiang2021contrastive,li2023s}, and graph classification \cite{sun2019infograph,you2020graph,xia2022simgrace}. 

Amidst these developments, SSL on graphs diverges into two broad groups: contrastive and generative approaches. Graph contrastive learning (GCL) methods focus on pulling positive samples closer and distancing negative ones. For instance, GRACE~\cite{zhu2020deep}, a node-level contrastive learning method, generates graph views with various data augmentations. It treats corresponding nodes of the augmented views as positive samples, while considering all other nodes within the same graph as negative samples. Meanwhile, generative approaches learn by predicting the missing or masked parts of the graph. VGAE~\cite{kipf2016variational} is a pioneering and popular example of generative methods, which reconstructs the adjacency matrix using the inner product of the latent node representations.

Though contrastive methods have demonstrated effectiveness in graph learning for various graph tasks, they encounter challenges such as the need for robust graph augmentations~\cite{sugrl2022} and the associated computational overhead~\cite{thakoor2022large}. Recent graph SSL has seen a significant shift towards generative methods, with \textit{masked feature reconstruction} (MFR) emerging as a particularly powerful technique demonstrated in GraphMAE~\cite{hou2022graphmae} and GraphMAE2~\cite{hou2023graphmae2}. MFR augments the graph by masking raw node features and then challenging the model to reconstruct these masked nodes based on the remaining graph structure. Notably, GraphMAE outperform not only other generative methods but also surpass most contrastive learning methods across both node and graph classification tasks without requiring extensive augmentation tuning.

It is intriguing that both MFR and GCL aim to maximize the agreement between similar elements. MFR reconstructs the original features of masked nodes by maximizing agreement between the masked nodes' raw features and their reconstructed embeddings, while contrastive methods form positive pairs between similar graph elements, such as nodes and graphs, and maximize the agreement between them. Given these similarities, this raises the question: (1) \textit{Is MFR compatible with GCL, and can we integrate them to leverage the benefits of both worlds?} Our investigation reveals a crucial insight: \textit{the objectives of MFR and node-level GCL converge} under certain conditions, when the temperature parameter $T$ in the contrastive loss \cite{chen2020simple} approaches zero, amounting to not using negative samples explicitly. Before presenting the theoretical analysis in Sect.~\ref{sec:theoretical_insights}, we conducted preliminary experiments that empirically support this insight (Sect.~\ref{sec:prelim_analysis}). As the temperature decreases, the performance of node-level GCL aligns more closely with that of MFR. The trend suggests that at lower temperatures, positive pairs become more dominant in the loss function, converging to a scenario where negative samples are minimally or not utilized. This phenomenon underlines the diminishing distinction between contrastive and generative methods under certain temperature conditions, effectively positioning MFR as a unique subset of node-level GCL that does not require explicit negative samples. 

In particular, integrating MFR into contrastive learning would allow the exploitation of negative samples without the augmentation overhead associated with contrastive learning. Essentially, we can ground both positive and negative pairs on the masked nodes, without requiring elaborate augmentation strategies.  
While the reconstructed masked nodes and their original features naturally constitute the positive pairs, we wonder a crucial question: (2) \textit{How can we leverage masked nodes to form effective negative samples within MFR?} Masked nodes, while distinct in their raw features, become indistinguishable once masked. By forming negative samples based on these masked nodes, we encourage the model to deeply understand and reconstruct their original features, effectively capturing the intrinsic graph structures. Our empirical findings suggest that employing masked nodes as negative samples significantly enhances the model's generalization and discrimination capabilities across various graph-based tasks.

In response to these questions, we introduce \textbf{Co}ntrastive Masked Feature \textbf{Re}construction (CORE), a novel self-supervised learning (SSL) approach for graphs. CORE innovatively integrates contrastive learning with masked feature reconstruction by leveraging masked nodes as both positive and negative samples. By forming positive pairs exclusively between the original and reconstructed features of masked nodes, and then using other masked nodes as negative samples, CORE ensures that the encoder prioritizes contextual information while effectively distinguishing between similar nodes. Our contributions are threefold: (1) We provide a novel perspective that aligns masked feature reconstruction with node-level graph contrastive learning through both theoretical and empirical analysis. (2) We introduce CORE, a new SSL framework that seamlessly integrates feature reconstruction and contrastive learning using masked nodes. (3) We validate our approach through extensive experiments on node and graph classification tasks, demonstrating significant performance improvements, thus substantiating our theoretical insights.

\section{Related Work}
Self-supervised graph learning is increasingly recognized for enabling models to capture essential patterns in graph data without labeled inputs. This approach is typically categorized into contrastive and generative methods, each offering distinct strategies for enhancing representation learning. (1)~\textbf{Contrastive methods} refine representations by contrasting positive and negative samples across various graph elements. DGI~\cite{velickovic2019deep} maximizes mutual information between nodes and graphs, while GRACE~\cite{zhu2020deep} and GCC~\cite{qiu2020gcc} focus on node agreement across different augmented views. Graph-level methods like GraphCL~\cite{you2020graph} and SimGRACE~\cite{xia2022simgrace} maximize agreement between graph embeddings. GCA~\cite{zhu2021graph} introduces integrity-preserving augmentations, and JOAO~\cite{you2021graph} automates augmentation sampling. Unlike previous methods, our approach integrates MFR into contrastive learning, leveraging masked nodes as both anchors and negatives. This eliminates the need for complex augmentations, simplifying the learning process. (2)~\textbf{Generative methods} focus on structure and feature reconstruction. Structure reconstruction methods like VGAE~\cite{kipf2016variational}, GPT-GNN~\cite{hu2020gpt}, and MaskGAE~\cite{li2023s} progress from reconstructing node relationships to capturing complex structural patterns. Feature reconstruction methods, such as LaGraph~\cite{xie2022self}, GraphMAE~\cite{hou2022graphmae}, and GraphMAE2~\cite{hou2023graphmae2}, predict raw node features, employing node masking and advanced reconstruction metrics. These methods emphasize feature reconstruction’s effectiveness over structure-based approaches. Diverging from these, our model builds on theoretical insights about the convergence of MFR and contrastive learning objectives to create an integrated framework that leverages the strengths of both approaches. 

It is worth noting the existing approaches that combine generative and contrastive learning in graph SSL. SGL~\cite{zhu2023sgl} combines graph-level GCL and GraphMAE through a two-branch architecture, while GCMAE~\cite{GCMAE} develops a similar two-branch framework that treats masked autoencoder (MAE) and contrastive learning as separate, complementary learning strategies. Other recent works have explored incorporating uniformity constraints into masked autoencoders. U-MAE~\cite{zhang2022mask} addresses feature collapse by applying a uniformity loss on encoder outputs to spread out learned embeddings in the latent space. Similarly, AUG-MAE~\cite{wang2024rethinking} extends this concept to graphs, applying uniformity regularization on encoder representations alongside an adversarial masking strategy.

Our approach differs fundamentally from both two-branch frameworks and uniformity-based methods. Rather than treating contrastive and generative learning as separate branches or applying additional constraints in the latent space, CORE directly incorporates contrastive learning into the feature reconstruction process. By explicitly integrating MFR into a full contrastive learning framework in the \emph{feature space}, we leverage both positive and negative pairs formed from masked nodes. This enables our model to simultaneously benefit from the ``pull" effect of aligning reconstructed features with their originals and the ``push" effect of discriminating between different masked nodes, leading to more context-rich representations.

\section{Preliminaries}
\stitle{Node-level Graph Contrastive Learning.}
\label{sec:prelim_contrastive}
In the domain of graph contrastive learning, traditional node-level GCL, as exemplified by GRACE~\cite{zhu2020deep}, focuses on maximizing alignment between node embeddings derived from two different augmented views of input graph $\mathcal{G} = \{\mathbf{X}, \mathbf{A}\}$, where $\mathbf{X} \in \mathbb{R}^{N \times F}$ represents raw node features and $\mathbf{A} \in \{0, 1\}^{N \times N}$ is the adjacency matrix. Two augmented views of $\mathcal{G}$, $\widetilde{\mathcal{G}}_A = \{\mathbf{X}_A, \mathbf{A}_A\}$ and $\widetilde{\mathcal{G}}_B=\{\mathbf{X}_B, \mathbf{A}_B\}$, are generated via augmentations $\tau_1$ and $\tau_2$, affecting both node features and graph structure. Node embeddings for these augmented graphs are often obtained through a shared encoder $f_E(\cdot)$ and a projection head $f_D(\cdot)$. Specifically, for $\widetilde{\mathcal{G}}_A$, the node embedding matrix $\mathbf{H}_A = f_E(\mathbf{X}_A, \mathbf{A}_A)$ and projected node embedding $\mathbf{U} = f_D(\mathbf{H}_A)$. Similarly, for $\widetilde{\mathcal{G}}_B$, $\mathbf{H}_B = f_E(\mathbf{X}_B, \mathbf{A}_B)$ and $\mathbf{V} = f_D(\mathbf{H}_B)$. The contrastive objective encourages the model to distinguish between positive samples, i.e., pairs of embeddings $\boldsymbol{u}_i \in \mathbf{U}$ and $\boldsymbol{v}_i \in \mathbf{V}$ corresponding to the $i$-th node in two views, and negative samples, which encompass all other nodes within and across these views. The pairwise objective function $\ell\left(\boldsymbol{u}_i, \boldsymbol{v}_i\right)$ as detailed in Eq.~\eqref{eq:pairwise_obj}, employing cosine similarity $ \theta(\cdot, \cdot) $ as a measure of alignment, adjusted by temperature parameter $T$~\cite{chen2020simple}. 
\begin{equation}
\small
\begin{split}
\ell(\boldsymbol{u}_i, \boldsymbol{v}_i) 
= \log \Bigg( e^{\theta(\boldsymbol{u}_i, \boldsymbol{v}_i)/ T} /\Bigg(\underbrace{e^{\theta(\boldsymbol{u}_i, \boldsymbol{v}_i) / T}}_{\text{positive pair}} + \\
\underbrace{\sum_{k=1}^N \mathbf{1}_{[k \neq i]} e^{\theta(\boldsymbol{u}_i, \boldsymbol{v}_k)/ T}}_{\text{inter-view negative pairs}} + 
\underbrace{\sum_{k=1}^N \mathbf{1}_{[k \neq i]} e^{\theta(\boldsymbol{u}_i, \boldsymbol{u}_k)/ T}}_{\text{intra-view negative pairs}}\Bigg) \Bigg)
\end{split}
\label{eq:pairwise_obj}
\end{equation}
where $\mathbf{1}_{[k \neq i]}$ is an indicator function. And the overall contrastive loss averages over all positive pairs within the graph:
\begin{equation}
    \small
     \mathcal{L}_{\text {contrast}}=-\frac{1}{2N} \sum_{i=1}^{N}\left[\ell\left(\boldsymbol{u}_i, \boldsymbol{v}_i\right)+\ell\left(\boldsymbol{v}_i, \boldsymbol{u}_i\right)\right]
     \label{eq:contrastive_loss}
\end{equation}
\stitle{Masked Feature Reconstruction.}
\label{sec:prelim_sce}
In this work, we focus on masked feature reconstruction proposed by GraphMAE~\cite{hou2022graphmae} with scaled cosine error (SCE) since mean squared error (MSE) could suffer from the issues of sensitivity and low selectivity \cite{hou2022graphmae}. Consider a subset of nodes in $\mathcal{G}=\{\mathbf{X}, \mathbf{A}\}$, denoted as $\widetilde{\mathcal{V}} \subset V$, whose features are selectively masked by a learnable token $[\text{MASK}]$, resulting in a corrupted feature representation $\widetilde{\mathbf{X}}$. This corrupted graph, represented as $(\widetilde{\mathbf{X}}, \mathbf{A})$, is then fed into the encoder $f_E(\cdot)$ to generate node representations $\mathbf{H} = f_E(\widetilde{\mathbf{X}}, \mathbf{A})$. Subsequently, the decoder $f_D(\cdot)$ processes $\mathbf{H}$ to predict the masked features, resulting in the output $\mathbf{Z}$. The reconstruction objective $ \mathcal{L}_{\text{rec}} $ aims to minimize the cosine similarity between the raw and predicted features for nodes in $\widetilde{\mathcal{V}}$:
\begin{equation}
    \small
    \mathcal{L}_{\text {rec }} = \frac{1}{|\widetilde{\mathcal{V}}|} \sum_{v_i \in \widetilde{\mathcal{V}}} \left(1 - \frac{\boldsymbol{x}_i^\top \boldsymbol{z}_i}{\|\boldsymbol{x}_i\| \cdot \|\boldsymbol{z}_i\|}\right)^\gamma
    \label{eq:rec_loss}
\end{equation}
where $ \boldsymbol{x}_i $ is the raw feature of node $v_i$, $\boldsymbol{z}_i$ is the predicted feature the node obtained through the decoder $f_D$ with $\mathbf{Z} = f_D(\mathbf{H})$, and $ \gamma \geq 1 $ is a scaling factor. 

\section{Proposed Method}
In this section, we present our theoretical insight into the relationship between MFR and node-level GCL, demonstrating how their objectives converge under certain conditions despite their distinct approaches. Building on their mathematical compatibility, we propose \textbf{Co}ntrastive Masked Node \textbf{Re}construction (CORE), a framework that integrates contrastive learning into masked feature reconstruction to leverage the strengths of both approaches.

\subsection{Theoretical Connection between MFR and Node-level GCL}
\label{sec:theoretical_insights}
This section examines the relationship between masked feature reconstruction (MFR) and node-level graph contrastive learning (GCL), revealing their theoretical compatibility despite their distinct operational mechanisms.

\begin{definition}[Convergence Conditions]\label{def:conditions}
Define a set of conditions $C$ as follows. (i) The temperature parameter $T$ approaches zero ($T \rightarrow 0$). (ii) The augmentation set only includes two operations, $\tau = \{\text{Masking}, \text{No-Operation}\}$, with each applied separately to generate two distinct views, where `Masking' randomly masks certain node features, and `No-Operation' leaves the node features unchanged, and a set of anchor nodes are exclusively formed by the set of masked nodes $\widetilde{V}$. (iii) The anchor nodes $\widetilde{V}$ in the `No-Operation' view can be reconstructed with fidelity approaching the theoretical maximum, i.e., $\mathbf{X}'_{\widetilde{V}} \rightarrow \mathbf{X}_{\widetilde{V}}$, where the similarity between the decoder's output $\mathbf{X}'_{\widetilde{V}}$ and their original raw features $\mathbf{X}_{\widetilde{V}}$ approaches the ideal score, indicating a near-identical reconstruction.\qed
\end{definition}

\begin{theorem}[Convergence of MFR and GCL Objectives]\label{th:convergence}
Under conditions $C$ in Def.~\ref{def:conditions}, the objectives of MFR and node-level GCL converge:
$\mathcal{L}_{\text{contrast}} = \mathcal{L}_{\text{rec}}$,
where $\mathcal{L}_{\text{contrast}}$ and $\mathcal{L}_{\text{rec}}$ are defined in Eqs.~\eqref{eq:contrastive_loss} and \eqref{eq:rec_loss}, respectively.
\qed
\end{theorem}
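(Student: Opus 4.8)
The plan is to prove the identity as a limiting statement by threading together the three conditions in Def.~\ref{def:conditions}: condition~(ii) fixes a correspondence between the GCL views and the MFR pipeline, condition~(i) strips the negative terms out of $\mathcal{L}_{\text{contrast}}$, and condition~(iii) collapses both objectives to the same value.

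First I would make the view-to-reconstruction correspondence explicit. Under condition~(ii) I take the No-Operation view as view $A$ and the Masking view as view $B$, so that for an anchor $v_i \in \widetilde{V}$ the embedding $\boldsymbol{u}_i$ is obtained from the intact features while $\boldsymbol{v}_i$ coincides with the MFR reconstruction $\boldsymbol{z}_i$. Invoking condition~(iii), which sends the decoded anchors to their raw features ($\boldsymbol{u}_i \to \boldsymbol{x}_i$), the positive-pair similarity becomes exactly the reconstruction similarity, $\theta(\boldsymbol{u}_i,\boldsymbol{v}_i) = \theta(\boldsymbol{x}_i,\boldsymbol{z}_i)$. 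Since the anchors are exactly the masked set $\widetilde{V}$, the averages in Eqs.~\eqref{eq:contrastive_loss} and \eqref{eq:rec_loss} run over the same index set, so it suffices to match the two losses node by node.

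Next I would take the zero-temperature limit of the pairwise term. Factoring $e^{\theta(\boldsymbol{u}_i,\boldsymbol{v}_i)/T}$ out of the denominator of Eq.~\eqref{eq:pairwise_obj} rewrites it as
\begin{equation}
\small
-\ell(\boldsymbol{u}_i,\boldsymbol{v}_i) = \log\Bigl(1 + \sum_{k\neq i}\bigl(e^{(\theta(\boldsymbol{u}_i,\boldsymbol{v}_k)-\theta(\boldsymbol{u}_i,\boldsymbol{v}_i))/T} + e^{(\theta(\boldsymbol{u}_i,\boldsymbol{u}_k)-\theta(\boldsymbol{u}_i,\boldsymbol{v}_i))/T}\bigr)\Bigr).
\end{equation}
Under condition~(iii) the positive similarity reaches the ceiling of cosine similarity, $\theta(\boldsymbol{u}_i,\boldsymbol{v}_i)\to 1$, which is at least as large as any negative similarity; hence each exponent is nonpositive, and as $T\to 0$ every negative summand tends to $0$, giving $-\ell(\boldsymbol{u}_i,\boldsymbol{v}_i)\to \log 1 = 0$. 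The symmetric term $\ell(\boldsymbol{v}_i,\boldsymbol{u}_i)$ is handled identically because $\theta$ is symmetric, so $\mathcal{L}_{\text{contrast}}\to 0$.

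Finally I would evaluate $\mathcal{L}_{\text{rec}}$ under the same condition: $\theta(\boldsymbol{x}_i,\boldsymbol{z}_i)\to 1$ forces $(1-\theta(\boldsymbol{x}_i,\boldsymbol{z}_i))^\gamma\to 0$ for every $v_i\in\widetilde{V}$, so $\mathcal{L}_{\text{rec}}\to 0$ and the two objectives coincide. The main obstacle is the claim that the negative terms genuinely vanish rather than merely shrinking: this rests squarely on condition~(iii) pinning $\theta(\boldsymbol{u}_i,\boldsymbol{v}_i)$ to the maximal value $1$, so it strictly dominates every negative and the gaps $\theta(\boldsymbol{u}_i,\boldsymbol{v}_k)-\theta(\boldsymbol{u}_i,\boldsymbol{v}_i)$ are nonpositive. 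I would need to dispatch the borderline case in which some negative similarity also approaches $1$ --- a tie would leave a residual $\log(1+m)$ term --- by arguing it does not occur for generic (distinct) anchor features, and I would fix the order of limits, enforcing condition~(iii) before $T\to 0$ so the exponents are unambiguously negative and no indeterminate $0\cdot\infty$ form arises.
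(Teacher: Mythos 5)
There is a genuine gap, and it lies in your use of condition~(iii). That condition asserts fidelity of the \emph{No-Operation} view's decoding of the anchor nodes, $\mathbf{X}'_{\widetilde{V}} \rightarrow \mathbf{X}_{\widetilde{V}}$, i.e., $\boldsymbol{u}_i \rightarrow \boldsymbol{x}_i$ --- which you state correctly at first --- but it says nothing about the masked-view reconstruction $\boldsymbol{z}_i$. Your subsequent leap from $\boldsymbol{u}_i \rightarrow \boldsymbol{x}_i$ to ``the positive similarity reaches the ceiling, $\theta(\boldsymbol{u}_i,\boldsymbol{v}_i)\to 1$'' silently assumes $\boldsymbol{z}_i \rightarrow \boldsymbol{x}_i$ as well, i.e., that the masked reconstruction is already perfect. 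But $\theta(\boldsymbol{x}_i,\boldsymbol{z}_i)$ is exactly the quantity MFR trains; under your reading, $\mathcal{L}_{\text{rec}}$ is forced to its optimum before the comparison begins, and your endgame ``both losses tend to $0$, hence coincide'' establishes only the vacuous identity $0=0$ at a degenerate point of parameter space. The theorem's content is that the two losses agree as \emph{objectives} --- as functions of the model through $\mathbf{Z}_{\widetilde{V}}$ --- so that optimizing MFR is optimizing negative-free node-level GCL; an equality of two constants at the perfect-reconstruction limit does not deliver that, and it also cannot support the paper's downstream reading of MFR as GCL at near-zero temperature.

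The paper's route keeps $\mathbf{Z}_{\widetilde{V}}$ arbitrary throughout. The negatives are eliminated by $T\to 0$ alone, combined with the standing assumption (imported from GRACE) that the positive similarity \emph{strictly} exceeds every negative similarity, $\theta(\boldsymbol{u}_i,\boldsymbol{w}) < \theta(\boldsymbol{u}_i,\boldsymbol{v}_i)$ for $\boldsymbol{w}\in Neg_i$ --- not by pinning $\theta(\boldsymbol{u}_i,\boldsymbol{v}_i)$ to $1$; this also dispatches the tie case you worried about by hypothesis rather than by a genericity argument. With the negatives gone, the paper reads the surviving pairwise objective as equivalent to maximizing $\theta(\boldsymbol{u}_i,\boldsymbol{v}_i)$, so the contrastive loss over the anchors becomes $\frac{1}{|\widetilde{V}|}\sum_{v_i\in\widetilde{V}}\bigl(1-\theta(\boldsymbol{x}'_i,\boldsymbol{z}_i)\bigr)$; only then is condition~(iii) invoked, purely to substitute $\boldsymbol{x}'_i \rightarrow \boldsymbol{x}_i$ in the positive term, which yields exactly $\mathcal{L}_{\text{rec}}$ (with $\gamma=1$, as the paper notes) for arbitrary $\mathbf{Z}_{\widetilde{V}}$. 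Your view-to-pipeline mapping and your factoring of $e^{\theta(\boldsymbol{u}_i,\boldsymbol{v}_i)/T}$ out of the denominator match the paper's Steps~1--3, and your observation that the literal pointwise limit of $\ell$ is the constant $0$ is honest mathematics that the paper finesses with this ``effective objective'' reading. To repair your argument: drop the claim $\theta(\boldsymbol{u}_i,\boldsymbol{v}_i)\to 1$, adopt the strict-dominance assumption for the $T\to 0$ step, and use condition~(iii) only to identify $\boldsymbol{u}_i$ with $\boldsymbol{x}_i$ in the surviving positive-pair term.
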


    \begin{proof}[Sketch]
        We establish the convergence between MFR and node-level GCL objectives through a detailed three-step analysis:
        
        \textbf{Step 1: Analysis of contrastive loss under limiting temperature.}
        With the condition $T \rightarrow 0$, we begin by establishing that for any masked node $v_i$, the similarity between its positive pair $\theta\left(\boldsymbol{u}_i, \boldsymbol{v}_i\right)$ exceeds that of any negative pair $\theta\left(\boldsymbol{u}_i, \boldsymbol{w}\right)$~\cite{zhu2020deep}, where $\boldsymbol{w} \in Neg_i$ represents a node within the set of negative samples for $v_i$:
        \begin{equation}
            \theta\left(\boldsymbol{u}_i, \boldsymbol{w}\right) < \theta\left(\boldsymbol{u}_i, \boldsymbol{v}_i\right)
        \end{equation}
        
        As $T$ approaches zero, this similarity difference is exponentially amplified in the contrastive objective. Specifically, we can analyze the ratio of the exponential terms:
        \begin{equation}
        \lim_{T \rightarrow 0}\frac{e^{{\theta\left(\boldsymbol{u}_i, \boldsymbol{w}\right)}/{T}}}{e^{{\theta\left(\boldsymbol{u}_i, \boldsymbol{v}_i\right)}/{T}}} = \lim_{T \rightarrow 0} e^{\left({\theta\left(\boldsymbol{u}_i, \boldsymbol{w}\right)} - \theta\left(\boldsymbol{u}_i, \boldsymbol{v}_i\right)\right)/{T}} = 0
        \end{equation}
        
        Since the exponent ${\theta\left(\boldsymbol{u}_i, \boldsymbol{w}\right)} - \theta\left(\boldsymbol{u}_i, \boldsymbol{v}_i\right)$ is negative and divided by a value approaching zero, the entire expression approaches zero. This means that the contribution from negative pairs becomes negligible compared to the positive pair when $T$ is very small.
        
        \textbf{Step 2: Reformulation and simplification of the pairwise objective.}
        We now analyze how this limiting behavior affects the pairwise objective function from Eq.~\eqref{eq:pairwise_obj}. By taking the logarithm of the softmax expression, we get:
        \begin{equation}
        \begin{aligned}
        \max \ell\left(\boldsymbol{u}_i, \boldsymbol{v}_i\right) &= \max \lim_{T \rightarrow 0} \log \left(\frac{e^{{\theta\left(\boldsymbol{u}_i, \boldsymbol{v}_i\right)}/{T}}}{e^{{\theta\left(\boldsymbol{u}_i, \boldsymbol{v}_i\right)}/{T}}+\sum_{\boldsymbol{w} \in Neg_i} e^{{\theta\left(\boldsymbol{u}_i, \boldsymbol{w}\right)}/{T}}}\right) \\
        &= \max \lim_{T \rightarrow 0} \log \left(\frac{1}{1+\sum_{\boldsymbol{w} \in Neg_i} \frac{e^{{\theta\left(\boldsymbol{u}_i, \boldsymbol{w}\right)}/{T}}}{e^{{\theta\left(\boldsymbol{u}_i, \boldsymbol{v}_i\right)}/{T}}}}\right) \\
        &= \min \lim_{T \rightarrow 0} \log \left(1 + \sum_{\boldsymbol{w} \in Neg_i} \frac{e^{{\theta\left(\boldsymbol{u}_i, \boldsymbol{w}\right)}/{T}}}{e^{{\theta\left(\boldsymbol{u}_i, \boldsymbol{v}_i\right)}/{T}}}\right)
        \end{aligned}
        \end{equation}
        
        Since each term in the summation approaches zero as shown in Step 1, the summation also approaches zero. Therefore, maximizing the pairwise objective becomes equivalent to minimizing $\log(1 + 0) = 0$, which is a constant. This means that the optimization effectively becomes unconstrained by the negative samples, and the objective simplifies to:
        \begin{equation}
        \max \ell\left(\boldsymbol{u}_i, \boldsymbol{v}_i\right) \equiv \max \theta(\boldsymbol{u}_i, \boldsymbol{v}_i)
        \end{equation}
        
        Thus, under the condition $T \rightarrow 0$, maximizing the contrastive objective becomes equivalent to directly maximizing the cosine similarity between the positive pairs, effectively ignoring the negative samples.
        
        \textbf{Step 3: Connecting GCL and MFR under the specified augmentation set.}
        Given the augmentation set $\tau$, we can now establish a concrete mapping between the GCL and MFR frameworks:
        
        \begin{itemize}
        \item For $\widetilde{\mathcal{G}}_A$ (the graph view with `No-Operation' augmentation): The node embeddings $\boldsymbol{u}_i$ correspond to the $i$-th node embedding from $\mathbf{X}' = f_D(f_E(\mathbf{X}, \mathbf{A}))$, which are the reconstructed features of the original graph.
        
        \item For $\widetilde{\mathcal{G}}_B$ (the graph view with `Masking' augmentation): The node embeddings $\boldsymbol{v}_i$ correspond to $\boldsymbol{z}_i$ from $\mathbf{Z} = f_D(f_E(\widetilde{\mathbf{X}}, \mathbf{A}))$, which are the reconstructed features of the masked graph.
        \end{itemize}
        

        Under this mapping, the contrastive loss for masked nodes becomes:
        \begin{equation}
        \begin{aligned}
        \mathcal{L}_{\text{contrast}} &= -\frac{1}{|\widetilde{\mathcal{V}}|} \sum_{v_i\in\widetilde{\mathcal{V}}} \ell\left(\boldsymbol{u}_i, \boldsymbol{v}_i\right) \\
        &= \frac{1}{|\widetilde{\mathcal{V}}|} \sum_{v_i\in\widetilde{\mathcal{V}}} \left(1 - \frac{\boldsymbol{x}'_i \cdot \boldsymbol{z}_i}{\| \boldsymbol{x}'_i \| \cdot \| \boldsymbol{z}_i \|}\right)
        \end{aligned}
        \end{equation}
        
        When comparing with the MFR loss in Eq.~\eqref{eq:rec_loss} (with $\gamma=1$ for simplicity), we can express both objectives in a parallel form:
        \begin{equation}
            \mathcal{L}_{\text{rec}} = 1 - \theta(\mathbf{X}_{\widetilde{V}}, \mathbf{Z}_{\widetilde{V}})
            \label{eq:rec_simple}
        \end{equation}
        \begin{equation}
            \mathcal{L}_{\text{contrast}} = 1 - \theta(\mathbf{X}'_{\widetilde{V}}, \mathbf{Z}_{\widetilde{V}})
            \label{eq:contrast_simple}
        \end{equation}
        
        Here, $\mathbf{X}_{\widetilde{V}}$ represents the original features of the masked nodes, $\mathbf{X}'_{\widetilde{V}}$ represents the reconstructed features of the masked nodes from the unmasked graph, and $\mathbf{Z}_{\widetilde{V}}$ represents the reconstructed features of the masked nodes from the masked graph.
        
        The key observation is that when condition (iii) in Definition~\ref{def:conditions} is satisfied, i.e., when $\mathbf{X}'_{\widetilde{V}} \rightarrow \mathbf{X}_{\widetilde{V}}$ (when the encoder-decoder can perfectly reconstruct the features of unmasked nodes), we have:
        \begin{equation}
        \lim_{\mathbf{X}'_{\widetilde{V}} \rightarrow \mathbf{X}_{\widetilde{V}}}{\mathcal{L}_{\text{contrast}}} = \mathcal{L}_{\text{rec}}
        \end{equation}
        This equivalence underscores a convergence between contrastive and reconstruction loss functions under ideal conditions of feature reconstruction.
        \end{proof}

The theoretical result reveals two key insights: (1) \textbf{Implicit Nature of MFR}: The theorem demonstrates that MFR implicitly operates as a node-level contrastive learning method without explicit negative samples (equivalent to setting near-zero temperature). While this enables effective feature reconstruction, it may limit the ability of the model to learn discriminative representations that distinguish between different nodes. (2). \textbf{Compatible Objectives}: Despite their difference, where GCL focuses on relative sample positioning in the latent space and MFR focuses on accurate feature reconstruction, their objectives can align under certain conditions. This alignment suggests that these methods are not fundamentally incompatible but rather synergistic, motivating their integration to leverage both discriminative and reconstructive capabilities.

These insights inspire our framework design: rather than operating at the theoretical limit where $T \rightarrow 0$, we maintain a standard temperature while introducing negative samples into the MFR process. This approach preserves the strengths of both methods, MFR's ability to capture node features and GCL's discriminative power, leading to more robust graph representations.

It is important to note that while our theoretical analysis relies on the conditions in Definition~\ref{def:conditions}, particularly $T \rightarrow 0$ and the perfect reconstruction requirement, these represent idealized theoretical conditions that facilitate mathematical analysis rather than practical implementation constraints. The value of this theoretical result lies in revealing the fundamental connection between two seemingly distinct approaches, providing conceptual justification for their integration. In practice, as we will show in our experiments (Fig.~\ref{fig:temperatures}), operating at very low temperatures actually degrades performance, reinforcing that our framework's strength comes from deliberately moving away from the theoretical limit to leverage the complementary benefits of both approaches.

\subsection{Contrastive Masked Feature Reconstruction}
\label{sec:core}

\begin{figure*}[t]
\centering
\includegraphics[width=0.9\linewidth]{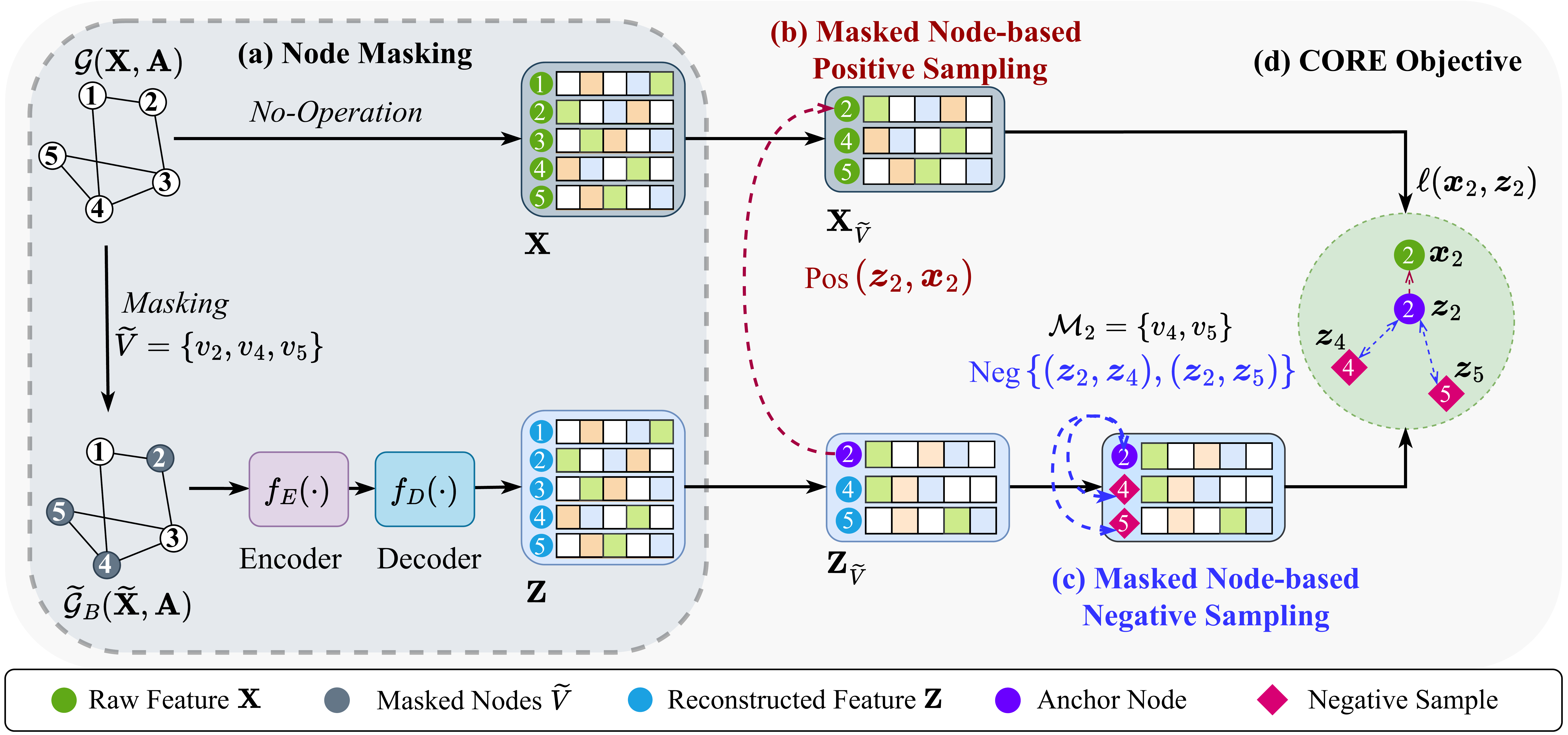}
\caption{Overview of the CORE framework. 
(For clarity, we only show one anchor node here; the pairwise objective
 is applied across all masked nodes in the graph.)}
\label{fig:core}
\end{figure*}

Building upon the theoretical insights outlined in Sect.\ref{sec:theoretical_insights}, we introduce our \textbf{Co}ntrastive Masked Feature \textbf{Re}construction (CORE), a novel graph SSL framework motivated by the idea of integrating the generative mechanism of MFR with the contrastive approach of GCL. By leveraging masked nodes as both positive and negative samples within the same framework, CORE reduces the reliance on complex augmentations, thereby enhancing the model's ability to capture intrinsic graph structures while boosting the performance of masked feature reconstruction. A key feature of our framework lies in its ability to seamlessly integrate with any existing MFR framework, including the state-of-the-art GraphMAE and its successors, while preserving their specific techniques that encompass both the node masking process and the encoder-decoder architecture, as shown in Fig.~\ref{fig:core}(a).

\stitle{Design choices.} To materialize our CORE framework, we detail two pivotal design choices that underpin our approach. These elements collectively contribute to the robustness of CORE, facilitating a synergistic integration of contrastive learning while preserving the fundamental aspects of MFR.

\emph{\textbf{1) Masked node-based positive sampling.}} 
Conventional contrastive methods often use all nodes as anchor points~\cite{zhu2020deep,thakoor2022large}. This is equivalent to form positive pairs between the predictions of all nodes (outputs from the encoder-decoder architecture) and their raw features in the MFR framework. However, forming positive pairs on unmasked nodes might discourage the encoder from leveraging contextual information to reconstruct the nodes, thereby diminishing the overall efficacy in capturing relevant information from neighboring nodes~\cite{xie2022self}. Therefore, we form the positive pairs exclusively on the masked nodes to encourage the encoder to \textit{prioritize contextual information} over the node's own features. In our framework, each positive pair consists of the reconstruction for a masked node alongside its raw, unmasked features $(\boldsymbol{z}_i, \boldsymbol{x}_i)$ as depicted in Fig.~\ref{fig:core}(b). It refines the contrastive learning process, ensuring a focused engagement with the original objective of masked feature reconstruction. 

\emph{\textbf{2) Masked node-based negative sampling.}} Diverging from other contrastive methods \cite{zhu2020deep,zhu2021graph} that treat all other nodes as negative samples, our second design choice, depicted in Fig.~\ref{fig:core}(c), exclusively selecting negative samples from the set of masked nodes, $\widetilde{\mathcal{V}}$, to form negative pair $(\boldsymbol{z}_i, \boldsymbol{z}_k)$ for $v_i$. This strategy has two following advantages. (i) \textit{Enhanced feature reconstruction}: Masked nodes, which are distinct in their raw features, become indistinguishable once masked, especially when masked with the same learnable token as typically employed \cite{hou2022graphmae,hou2023graphmae2}. This characteristic makes them ideal negative samples, as differentiating these nodes requires the model to deeply understand and reconstruct their raw features. (ii) \textit{Context-rich representation learning}: Encouraging differentiation among the masked nodes pushes the model to learn robust representations that accurately reflect a node's neighborhood and its contextual information, making the reconstructed features not just precise but contextually rich, reflecting the intricate relationships within the graph structure.

\stitle{Outline of CORE.}
Our framework begins with node masking, depicted in Fig.~\ref{fig:core}(a), where a subset of nodes $\widetilde{V}$ in the input graph $\mathcal{G}=\{\mathbf{X}, \mathbf{A}\}$ is selectively masked, creating a masked graph view $\widetilde{\mathcal{G}}_B(\widetilde{\mathbf{X}}, \mathbf{A})$. 
$\widetilde{\mathcal{G}}_B$ is subsequently processed by an encoder-decoder architecture, where the encoder produces latent representations of the nodes and the decoder reconstructs the masked nodes to $\mathbf{Z}_{\widetilde{V}}$. 
In alignment with our \textit{masked node-based positive sampling} strategy as shown in Fig.~\ref{fig:core}(b), we form the positive pairs composed of the reconstructed features $\mathbf{Z}_{\widetilde{V}}$ of the masked nodes and their corresponding raw features $\mathbf{X}_{\widetilde{V}}$ for each node in the set $\widetilde{V}$. 
Then, as shown in Fig.~\ref{fig:core}(c), our \textit{masked node-based negative sampling} strategy randomly selects $|\mathcal{M}_i|$ negative samples solely from the masked nodes for each anchor node, where $\mathcal{M}_i \subseteq \widetilde{\mathcal{V}} \setminus \{v_i\}$ and $|\mathcal{M}_i| \ll |\widetilde{\mathcal{V}}|$, providing diverse and varied negative examples for the model. Note that each subset $\mathcal{M}_i$ has an identical, predetermined size across all nodes. Finally, we amplify the similarity between each positive pair while distinguishing the reconstructed features from the selected negative samples.

\stitle{Loss formulation.} Having outlined our designs revolving around \textit{masked node-based positive sampling} and \textit{masked node-based negative sampling}, we now formalize the loss. Following the notations defined in Sect.~\ref{sec:prelim_sce}, where $\boldsymbol{z}_i$ denotes the reconstructed feature vector for masked node $v_i$ and $\boldsymbol{x}_i$ represents its raw features, we present the proposed pairwise objective below:

\begin{equation}
\small
\begin{split}
\ell\left(\boldsymbol{z}_i, \boldsymbol{x}_i\right)=\log \Bigg(e^{\theta\left(\boldsymbol{z}_i, \boldsymbol{x}_i\right)/ T} / \Bigg(\underbrace{e^{\theta\left(\boldsymbol{z}_i, \boldsymbol{x}_i\right) / T}}_{\text {positive pair}}+ \\
\underbrace{\sum_{k=1}^{|\mathcal{M}_i|} \mathbf{1}_{[k \neq i]} e^{\theta\left(\boldsymbol{z}_i, \boldsymbol{z}_{k}\right) / T}}_{\text{negative pairs}}\Bigg)\Bigg), \forall v_i \in \widetilde{V} \text{ and } \mathcal{M}_i \subseteq \widetilde{\mathcal{V}}.
\end{split}
\label{eq:core}
\end{equation}

The essence of this loss lies in its dual objectives. Firstly, to maximize the agreement between the positive pair, i.e., the reconstructed and raw features of the masked node ($\boldsymbol{z}_i$ and $\boldsymbol{x}_i$) as in the original masked feature reconstruction. Secondly, to ensure each node's representation is distinct from those of other nodes ($\boldsymbol{z}_i$ and $\boldsymbol{z}_k$), we leverage negative samples drawn exclusively from the masked nodes. This focused approach ensures that the learning process is highly specific to the task of reconstructing masked nodes, leading to more robust and discriminative node representations.

The pairwise objective will be further aggregated across all masked nodes, leading to the overall objective as follows.
\begin{equation}
    \mathcal{L} = -\frac{1}{|\widetilde{V}|} \sum_{i=1}^{|\widetilde{V}|}\ell\left(\boldsymbol{z}_i, \boldsymbol{x}_i\right)
\end{equation}

By focusing on masked nodes and utilizing them as both positive and negative samples, CORE enhances masked feature reconstruction in a way that not only preserves the effectiveness of feature restoration but also improves the model's ability to capture intrinsic graph structures. This integration of generative methods with contrastive techniques leverages the strengths of both approaches, effectively addressing challenges such as the potential neglect of contextual information in feature reconstruction and the complexities of data augmentation. As a result, CORE offers a more robust and efficient approach to self-supervised graph learning.

\section{Experiments}
This section presents the empirical study evaluating the effectiveness of CORE framework. We assess CORE in unsupervised node and graph classification, comparing it against a comprehensive set of baselines. For each task, we follow the same experimental procedures, including data splits and evaluation protocols, as outlined in standard settings \cite{velickovic2019deep,zhang2021canonical,sun2019infograph}.

\stitle{Datasets.} 
For node classification task, we utlize eight standard benchmarks in transductive learning scenarios: Citeseer, Cora, PubMed~\cite{yang2016revisiting}, Wiki-CS~\cite{mernyei2020wiki}, Amazon-Computers (Computer), Ama\-zon-Photos (Photo)~\cite{mcauley2015image}, Coauthor-CS (CS), and Coauthor-Physics (Physics)~\cite{sinha2015overview} enabling a comprehensive evaluation of CORE's performance on various scale of graph. Additionally, the PPI dataset is used in an inductive learning setting, further demonstrating the CORE's versatility across different learning paradigms. For graph classification, we utilize seven benchmark datasets from TUDataset~\cite{morris2020tudataset}, including MUTAG, PROTEINS, NCI1 for scenarios where node labels serve as input features, and IMDB-Binary (IMDB-B), IMDB-Multi (IMDB-M), REDDIT-Binary (REDDIT-B), COLLAB for instances utilizing node degrees. This diverse selection enables a comprehensive analysis of CORE's applicability across different graph structures and classification challenges. 

\stitle{Baselines.}
Our benchmarks span a divers range of self-supervised graph pre-training methods, including both contrastive and generative approaches, with a special focus on GraphMAE~\cite{hou2022graphmae} and GraphMAE2~\cite{hou2023graphmae2}. They serve as direct competitors, since CORE utilized their Node Masking process as shown in Fig.~\ref{fig:core}(a). Importantly, our pairwise objective has been adaptively applied to the latent feature space in GraphMAE2 showcases its versatility.

For node classification, we first benchmark against supervised methods like GCN~\cite{thomas2017gcn} and GAT~\cite{velivckovic2018graph} followed by comparisons with contrastive models such as DGI~\cite{yanardag2015deep}, GMI~\cite{peng2020graph}, MVGRL~\cite{hassani2020contrastive}, GRACE~\cite{zhu2020deep}, BGRL~\cite{thakoor2022large}, CCA-SSG~\cite{zhang2021canonical}, and ABGML~\cite{chen2023graph}. We also include generative models, including GAE~\cite{kipf2016variational}, AUG-MAE~\cite{wang2024rethinking}, with a focus on GraphMAE and GraphMAE2 as our direct competitors. For graph classification, we evaluate against graph kernel methods like Weisfeiler-Lehman sub-tree kernel (WL)~\cite{shervashidze2011weisfeiler} and Deep Graph Kernel (DGK)~\cite{yanardag2015deep}, supervised approaches including GIN~\cite{xu2019powerful} and DiffPool~\cite{ying2018hierarchical}, various contrastive methods, such as GCC~\cite{qiu2020gcc}, graph2vec~\cite{narayanan2017graph2vec}, Infograph~\cite{sun2019infograph}, GraphCL~\cite{you2020graph}, JOAO~\cite{you2021graph}, MVGRL~\cite{hassani2020contrastive}, InfoGCL~\cite{Xu2021InfoGCLIG}, and GCMAE~\cite{GCMAE}. GraphMAE and GraphMAE2 are highlighted as critical benchmarks, reinforcing the direct comparison against our framework. 

\stitle{Settings.} For node classification, we train a GNN encoder using our framework. Following established methods~\cite{velickovic2019deep,hassani2020contrastive,zhang2021canonical,thakoor2022large}, the encoder's parameters are frozen during testing to produce node embeddings, which are then used to train a linear classifier. We evaluate using mean accuracy across 20 initializations on Cora, Citeseer, PubMed, and PPI using public data splits, with random splits (10\%/10\%/80\%) for other datasets. The encoder and decoder use the standard GAT~\cite{velivckovic2018graph} architecture as in GraphMAE. For graph classification, the pre-trained GNN encoder generates graph embeddings via a readout function, which are then used for prediction with a LIBSVM~\cite{chang2011libsvm} classifier. Performance is measured by mean 10-fold cross-validation accuracy over five runs. GIN~\cite{xu2019powerful} is employed as the backbone for both encoder and decoder, consistent with its widespread use in graph classification research. 

\subsection{Node Classification}
Adhering to the tradition of leveraging existing benchmarks for consistency, we report results from previous works with the same experimental setup if available, as detailed in Tab.~\ref{tab:node_clf}. To ensure a fair comparison with CORE, GraphMAE and GraphMAE2 results are re-evaluated on the same machine with provided configurations when available; otherwise, both methods are tuned based on validation performance. Tab.~\ref{tab:node_clf} showcases that CORE improves GraphMAE's performance on eight out of nine datasets in unsupervised node classification. Paired t-tests confirm that these improvements for both GraphMAE ($p = 0.0146$) and GraphMAE2 ($p = 0.0334$) are statistically significant.

Notably, CORE achieves the highest accuracy among the referenced SSL methods on most datasets, particularly excelling in Cora, Citeseer, Wiki-CS, Computer, Photo, and PPI. Although we include various self-supervised methods for reference, our analysis primarily emphasizes GraphMAE, our direct competitor. Besides, CORE with GraphMAE2 outperforms its predecessor consistently showcasing its versatility in adapting both raw and latent feature spaces.

\begin{table*}[t]
    \centering
    \small
    \caption{Experiment results in unsupervised representation learning for node classification. Bolded results indicate superior performance between our proposed method and corresponding GraphMAE framework. We report accuracy (\%) for all datasets.
    }
    \begin{threeparttable}
    \renewcommand\arraystretch{1}
    \resizebox{0.95\linewidth}{!}{
    \begin{tabular}{c|ccccccccc|c}
        \toprule[1.2pt]
         Dataset & Cora & Citeseer & PubMed & Wiki-CS & Computer & Photo & CS & Physics & PPI & Average Rank\\
        \midrule
        GCN &
        81.5{\scriptsize$\pm$0.2} &
        70.3{\scriptsize$\pm$0.4} &
        79.0{\scriptsize$\pm$0.5} &
        77.2{\scriptsize$\pm$0.1} &
        86.5{\scriptsize$\pm$0.5} & 
        92.4{\scriptsize$\pm$0.2} & 
        93.0{\scriptsize$\pm$0.3} & 
        95.7{\scriptsize$\pm$0.2} &
        75.7{\scriptsize$\pm$0.1} &
        - \\
        GAT &  
        83.0{\scriptsize$\pm$0.7} &
        72.5{\scriptsize$\pm$0.7} & 
        79.0{\scriptsize$\pm$0.3} & 
        77.7{\scriptsize$\pm$0.1} & 
        86.9{\scriptsize$\pm$0.3} & 
        92.6{\scriptsize$\pm$0.4} & 
        92.3{\scriptsize$\pm$0.2} & 
        95.5{\scriptsize$\pm$0.2} &
        97.3{\scriptsize$\pm$0.2} &
        - \\
        \midrule
        GAE & 
        71.5{\scriptsize$\pm$0.4} & 
        65.8{\scriptsize$\pm$0.4} & 
        72.1{\scriptsize$\pm$0.5} & 
        70.2{\scriptsize$\pm$0.0} & 
        85.3{\scriptsize$\pm$0.2} & 
        91.6{\scriptsize$\pm$0.1} & 
        90.0{\scriptsize$\pm$0.7} & 
        94.9{\scriptsize$\pm$0.1} & 
        - &
        12.1 \\
        GMI & 
        83.0{\scriptsize$\pm$0.2} & 
        72.4{\scriptsize$\pm$0.2} &
        79.9{\scriptsize$\pm$0.4} &
        74.8{\scriptsize$\pm$0.1} &
        82.2{\scriptsize$\pm$0.3} &
        90.7{\scriptsize$\pm$0.2} &
        - &
        - &
        - &
        10.7\\
        GCA & 
        81.8{\scriptsize$\pm$0.2} & 
        71.9{\scriptsize$\pm$0.4} & 
        81.0{\scriptsize$\pm$0.3} & 
        78.3{\scriptsize$\pm$0.0} & 
        87.9{\scriptsize$\pm$0.3} & 
        92.5{\scriptsize$\pm$0.1} & 
        93.1{\scriptsize$\pm$0.0} & 
        95.7{\scriptsize$\pm$0.0} &
        - &
        7.4\\
        DGI & 
        82.3{\scriptsize$\pm$0.6} &
        71.8{\scriptsize$\pm$0.7} &
        76.8{\scriptsize$\pm$0.6} &
        75.4{\scriptsize$\pm$0.1} &
        84.0{\scriptsize$\pm$0.5} &
        91.6{\scriptsize$\pm$0.2} &
        92.2{\scriptsize$\pm$0.6} &
        94.5{\scriptsize$\pm$0.5} &
        63.8{\scriptsize$\pm$0.2} &
        10.4\\
        MVGRL & 
        83.5{\scriptsize$\pm$0.4} &
        73.3{\scriptsize$\pm$0.5} &
        80.1{\scriptsize$\pm$0.7} &
        77.5{\scriptsize$\pm$0.1} &
        87.5{\scriptsize$\pm$0.1} &
        91.7{\scriptsize$\pm$0.1} &
        92.1{\scriptsize$\pm$0.1} &
        95.3{\scriptsize$\pm$0.0} &
        - &
        8.4\\
        GRACE & 
        81.9{\scriptsize$\pm$0.4} & 
        71.2{\scriptsize$\pm$0.5} &
        80.6{\scriptsize$\pm$0.4} &
        80.1{\scriptsize$\pm$0.5} &
        89.5{\scriptsize$\pm$0.4} &
        92.8{\scriptsize$\pm$0.5} &
        91.1{\scriptsize$\pm$0.2} &
        69.7{\scriptsize$\pm$0.2} &
        - &
        8.6\\  
        BGRL &
        82.7{\scriptsize$\pm$0.6} &
        71.1{\scriptsize$\pm$0.8} &
        79.6{\scriptsize$\pm$0.5} &
        80.0{\scriptsize$\pm$0.1} &
        90.3{\scriptsize$\pm$0.2} &
        93.2{\scriptsize$\pm$0.3} & 
        93.3{\scriptsize$\pm$0.1} & 
        95.7{\scriptsize$\pm$0.0} &
        73.6{\scriptsize$\pm$0.2} &
        5.6\\
        CCA-SSG & 
        84.0{\scriptsize$\pm$0.4} &
        73.1{\scriptsize$\pm$0.3} &
        81.0{\scriptsize$\pm$0.4} &
        75.7{\scriptsize$\pm$0.7} &
        88.7{\scriptsize$\pm$0.3} &
        93.1{\scriptsize$\pm$0.1} &
        93.3{\scriptsize$\pm$0.2} &
        95.4{\scriptsize$\pm$0.1} &
        - &
        6.3\\
        ABGML & 
        - &
        - &
        - &
        78.7{\scriptsize$\pm$0.6} &
        90.2{\scriptsize$\pm$0.3} &
        93.5{\scriptsize$\pm$0.4} &
        93.6{\scriptsize$\pm$0.2} &
        96.1{\scriptsize$\pm$0.1} &
        - &
        3.2\\
        AUG-MAE & 
        84.3{\scriptsize$\pm$0.4} &
        73.2{\scriptsize$\pm$0.4} &
        81.4{\scriptsize$\pm$0.4} &
        - &
        - &
        - &
        - &
        - &
        74.3 {\scriptsize$\pm$0.1}&
        2.3\\
        \midrule \midrule
        GraphMAE & 
        83.7{\scriptsize$\pm$0.8} &
        72.9{\scriptsize$\pm$0.7} & 
        80.6{\scriptsize$\pm$0.7} & 
        79.0{\scriptsize$\pm$0.4} & 
        89.6{\scriptsize$\pm$0.3} & 
        93.0{\scriptsize$\pm$0.3} & 
        92.9{\scriptsize$\pm$0.2} & 
        95.5{\scriptsize$\pm$0.1} &
        71.4{\scriptsize$\pm$0.3}&
        6.6\\
        \rowcolor{lightgray}
        $+$ CORE$^\text{1}$ & 
        \bf 84.1{\scriptsize$\pm$0.5} &
        \bf 74.0{\scriptsize$\pm$0.6} &
        \bf 80.6{\scriptsize$\pm$0.5} &
        \bf 79.6{\scriptsize$\pm$0.5} &
        \bf 90.7{\scriptsize$\pm$0.2} &
        \bf 93.6{\scriptsize$\pm$0.4} &
        \bf 93.0{\scriptsize$\pm$0.1} &
        \bf 95.6{\scriptsize$\pm$0.1} &
        \bf 73.4{\scriptsize$\pm$0.2} &
        \bf 4\\
        \midrule
        GraphMAE2 & 
        84.1{\scriptsize$\pm$0.6} &
        73.2{\scriptsize$\pm$0.5} &
        \bf 81.4{\scriptsize$\pm$0.7} &
        79.1{\scriptsize$\pm$0.5} &
        89.9{\scriptsize$\pm$0.3} &
        93.2{\scriptsize$\pm$0.3} &
        93.1{\scriptsize$\pm$0.2} &
        95.5{\scriptsize$\pm$0.2} &
        72.5{\scriptsize$\pm$0.3} &
        4.6\\
        \rowcolor{lightgray}
        $+$ CORE$^\text{2}$ &
        \bf 84.2{\scriptsize$\pm$0.5} &
        \bf 74.1{\scriptsize$\pm$0.6} &
        81.3{\scriptsize$\pm$0.7} &
        \bf 80.2{\scriptsize$\pm$0.6} &
        \bf 91.0{\scriptsize$\pm$0.3} &
        \bf 93.9{\scriptsize$\pm$0.4} &
        \bf 93.2{\scriptsize$\pm$0.2} &
        \bf 95.6{\scriptsize$\pm$0.1} &
        \bf 75.2{\scriptsize$\pm$0.2} &
        \bf 2.1\\
        \bottomrule[1.2pt]
    \end{tabular}}\\
    \end{threeparttable}
    \begin{tablenotes}
      \footnotesize
      \item For a fair comparison, we \textbf{reran the experiments for GraphMAE and GraphMAE2} using their official implementations.
    \item The reported results of other baselines are from previous papers, rounded to 1 decimal place, if available.
      \item $\textsuperscript{1}$ Our framework utilizes the node masking process proposed by GraphMAE. 
      \item $\textsuperscript{2}$ Our framework utilizes the node masking process proposed by GraphMAE2.
  \end{tablenotes}
    \label{tab:node_clf}
\end{table*}

\begin{table*}[t]
    \centering
    \small
    \caption{Experiment results in unsupervised representation learning for graph classification. Bolded results indicate superior performance between CORE and the corresponding GraphMAE framework. We report accuracy (\%) for all datasets. AR denotes the average rank of SSL methods.}
    \begin{threeparttable}
    \renewcommand\arraystretch{1}
    \resizebox{0.95\linewidth}{!}{
    \begin{tabular}{c|ccccccc|c}
        \toprule[1.2pt]
        Dataset & IMDB-B & IMDB-M & PROTEINS & COLLAB & MUTAG & REDDIT-B & NCI1 & Average Rank\\ 
        \midrule
        GIN &
        75.1{\scriptsize$\pm$5.1} &
        52.3{\scriptsize$\pm$2.8} &
        76.2{\scriptsize$\pm$2.8} &
        80.2{\scriptsize$\pm$1.9} & 
        89.4{\scriptsize$\pm$5.6} &
        92.4{\scriptsize$\pm$2.5} &
        82.7{\scriptsize$\pm$1.7} & 
        - \\
        DiffPool &
        72.6{\scriptsize$\pm$3.9} &  
        -  & 
        75.1{\scriptsize$\pm$3.5} &
        78.9{\scriptsize$\pm$2.3} & 
        85.0{\scriptsize$\pm$10.3} & 
        92.1{\scriptsize$\pm$2.6} & 
        - &
        - \\
        \midrule
        WL & 
        72.3{\scriptsize$\pm$3.4} &
        47.0{\scriptsize$\pm$0.5} &
        72.9{\scriptsize$\pm$0.6} &
        - & 
        80.7{\scriptsize$\pm$3.0} & 
        68.8{\scriptsize$\pm$0.4} &
        80.3{\scriptsize$\pm$0.5} &
        - \\
        DGK & 
        67.0{\scriptsize$\pm$0.6} &
        44.6{\scriptsize$\pm$0.5} &
        73.3{\scriptsize$\pm$0.8} &
        - &
        87.4{\scriptsize$\pm$2.7} &
        78.0{\scriptsize$\pm$0.4} &
        80.3{\scriptsize$\pm$0.5} &
        - \\ 
        \midrule
        graph2vec & 
        71.1{\scriptsize$\pm$0.5} &
        50.4{\scriptsize$\pm$0.9} &
        73.3{\scriptsize$\pm$2.1} &
        - & 
        83.2{\scriptsize$\pm$9.2} &
        75.8{\scriptsize$\pm$1.0} &
        73.2{\scriptsize$\pm$1.8} &
        10.6\\ 
        Infograph & 
        73.0{\scriptsize$\pm$0.9} &
        49.7{\scriptsize$\pm$0.5} &
        74.4{\scriptsize$\pm$0.3} &
        70.7{\scriptsize$\pm$1.1} &
        89.0{\scriptsize$\pm$1.1} &
        82.5{\scriptsize$\pm$1.4} &
        76.2{\scriptsize$\pm$1.1} &
        9.1\\ 
        GraphCL & 
        71.1{\scriptsize$\pm$0.4} &
        48.6{\scriptsize$\pm$0.7} &
        74.4{\scriptsize$\pm$0.4} &
        71.4{\scriptsize$\pm$1.2} &
        86.8{\scriptsize$\pm$1.3} &
        89.5{\scriptsize$\pm$0.8} &
        77.9{\scriptsize$\pm$0.4} &
        9.2\\
        JOAO & 
        70.2{\scriptsize$\pm$3.1} &
        49.2{\scriptsize$\pm$0.8} &
        74.6{\scriptsize$\pm$0.4} & 
        69.5{\scriptsize$\pm$0.4} & 
        87.4{\scriptsize$\pm$1.0} & 
        85.3{\scriptsize$\pm$1.4} &
        78.1{\scriptsize$\pm$0.5} &
        9.7\\
        GCC & 
        72.0 &
        49.4 &
        - & 
        78.9 &
        - & 
        89.8 &
        - &
        8.0\\ 
        MVGRL &
        74.2{\scriptsize$\pm$0.7} & 
        51.2{\scriptsize$\pm$0.5} &
        - & 
        - & 
        89.7{\scriptsize$\pm$1.1} & 
        84.5{\scriptsize$\pm$0.6} & 
        - &
        7.5\\
        InfoGCL &
        75.1{\scriptsize$\pm$0.9} & 
        51.4{\scriptsize$\pm$0.8} &  
        - & 
        80.0{\scriptsize$\pm$1.3} &
        91.2{\scriptsize$\pm$1.3} &
        - & 
        80.2{\scriptsize$\pm$0.6} &
        5.8\\
        GCMAE & 
        75.8{\scriptsize$\pm$0.2} &
        52.5{\scriptsize$\pm$0.5} &
        - & 
        81.3{\scriptsize$\pm$0.3} & 
        91.3{\scriptsize$\pm$0.6} & 
        91.8{\scriptsize$\pm$0.2} & 
        81.4{\scriptsize$\pm$0.3} &
        3.2\\
        AUG-MAE & 
        75.6{\scriptsize$\pm$0.6} &
        51.8{\scriptsize$\pm$0.9} &
        75.8{\scriptsize$\pm$0.2} & 
        80.5{\scriptsize$\pm$0.5} & 
        88.3{\scriptsize$\pm$1.0} & 
        88.0{\scriptsize$\pm$0.4} &
        - &
        5.8\\
        \midrule \midrule
        GraphMAE & 
        75.5{\scriptsize$\pm$0.7} &
        51.6{\scriptsize$\pm$0.5} &
        75.3{\scriptsize$\pm$0.4} &
        80.3{\scriptsize$\pm$0.5} &
        88.2{\scriptsize$\pm$1.3} &
        88.0{\scriptsize$\pm$0.2} &
        80.4{\scriptsize$\pm$0.3} &
        6.4\\
        \rowcolor{lightgray}
        $+$ CORE$^\text{1}$ & 
        \bf 76.1{\scriptsize$\pm$0.4} &
        \bf 52.8{\scriptsize$\pm$0.2} &
        \bf 76.4{\scriptsize$\pm$0.4} &
        \bf 83.4{\scriptsize$\pm$0.1} &
        \bf 89.4{\scriptsize$\pm$0.5} &
        \bf 89.5{\scriptsize$\pm$0.2} & 
        \bf 83.2{\scriptsize$\pm$0.2} &
        \bf 3.1\\
        \midrule
        GraphMAE2 & 
        76.2{\scriptsize$\pm$0.4} &
        52.7{\scriptsize$\pm$0.5} &
        76.3{\scriptsize$\pm$0.3} &
        84.1{\scriptsize$\pm$0.3} &
        89.6{\scriptsize$\pm$0.3} & 
        88.8{\scriptsize$\pm$0.4} &
        81.8{\scriptsize$\pm$0.5} &
        3.4\\
        \rowcolor{lightgray}
        $+$ CORE$^\text{2}$ & 
        \bf 76.7{\scriptsize$\pm$0.2} &
        \bf 53.0{\scriptsize$\pm$0.3} &
        \bf 77.1{\scriptsize$\pm$0.6} &
        \bf 84.3{\scriptsize$\pm$0.1} &
        \bf 90.1{\scriptsize$\pm$1.1} &
        \bf 92.1{\scriptsize$\pm$0.3} &
        \bf 83.1{\scriptsize$\pm$0.4} &
        \bf 1.4\\
        \bottomrule[1.2pt]
    \end{tabular}}
    \end{threeparttable}
    \label{tab:graph_clf}
\end{table*}

\subsection{Graph Classification}
For graph classification, we draw negative samples from the entire batch of graphs. This approach assumes nodes within the same graph share similarities, which could weaken negative sampling effectiveness. By sampling across the batch, we ensure sufficient distinction among masked nodes, avoiding the contrast of similar nodes within the same graph and enhancing the robustness of the learned representations.

Consistent with the convention in graph classification research, we report results from prior studies where applicable, as shown in Tab.~\ref{tab:graph_clf}. This table presents a comprehensive evaluation of unsupervised representation learning for graph classification, and CORE achieves improvement in all seven datasets compared to the original GraphMAE. Paired t-tests demonstrate a statistically significant improvement of CORE over GraphMAE ($p = 0.0035$).

Specifically, CORE with GraphMAE2 attained the highest accuracy on six datasets, surpassing other methods including supervised ones in IMDB-B, COLLAB, and NCI1 with substantial improvement. These results highlights the effectiveness of our proposed approach showcasing again that integrating contrastive learning within the masked feature reconstruction could extract meaningful information from graph data and advancing graph-level task performance. 

\subsection{Model Analysis}
Here, we assess the CORE framework by examining the effect of its two principal designs, effect of the temperature hyperparameter $T$, and the negative samples size. For brevity, in subsequent results, we default to reporting only the performance of CORE with GraphMAE2. 


The ablation studies in Tab.~\ref{tab:ablation} validate the effectiveness of CORE's two main designs: \textit{masked node-based positive sampling} and \textit{masked node-based negative sampling}, across diverse datasets.

\begin{table}[t]
    \centering
    \small
    \caption{Ablation study on the effect of the design choices in CORE, in node and graph classification performance.}
    \label{tab:ablation}
    \begin{threeparttable}
    \renewcommand\tabcolsep{1.5pt}
    \resizebox{\linewidth}{!}{
    \begin{tabular}{c|ccccccccc}
    \toprule[1.2pt]
    \multicolumn{10}{c}{\textbf{Node Classification Accuracy}}\\
    \midrule
    & Cora & Citeseer & PubMed & Wiki-CS & Computer & Photo & CS & Physics & PPI\\
    \midrule
    {\scriptsize w/o} MNPS & 
    83.8 & 70.9 & 57.4 & 79.1 & 81.8 & 91.9 & \bf 93.3 & 95.3 & 73.2\\
    {\scriptsize w/o} MNNS & 
    84.1 & 74.0 & 79.1 & 80.0 & 84.3 & 93.3 & 92.9 & 95.5 & 70.4\\
    {\scriptsize w/o} both & 
    83.5 & 71.0 & 79.9 & 80.2 & 90.9 & 93.8 & 93.3 & \bf 95.7 & \bf 75.5\\
    \rowcolor{lightgray}
    CORE & 
    \bf 84.2 & \bf 74.1 & \bf 81.3 & \bf 80.2 & \bf 91.0 & \bf 93.9 & 93.2 & 95.6 & 75.2\\
    \midrule \midrule
    \end{tabular}}

    \renewcommand\tabcolsep{1.5pt}
    \renewcommand\arraystretch{1.05}
    \resizebox{\linewidth}{!}{
    \begin{tabular}{c|ccccccc}
    \multicolumn{8}{c}{\textbf{Graph Classification Accuracy}}\\
    \midrule
    & IMDB-B & IMDB-M & PROTEINS & COLLAB & MUTAG & REDDIT-B & NCI1\\
    \midrule
    {\scriptsize w/o} MNPS & 
    76.4 & 52.4 & 77.1 & 81.1 & 87.1 & 91.9 & 82.6 \\
    {\scriptsize w/o} MNNS & 
    76.1 & 52.8 & 77.1 & 78.7 & 82.0 & 88.6 & 80.2 \\
    {\scriptsize w/o} both & 
    76.1 & 52.5 & 77.1 & 82.4 & 85.7 & 91.4 & 82.7 \\
    \rowcolor{lightgray}
    CORE & 
    \bf 76.7 & \bf 53.0 & \bf 77.1 & \bf 84.3 & \bf 90.1 & \bf 92.1 & \bf 83.1 \\
    \bottomrule[1.2pt]
    \end{tabular}}
    \end{threeparttable}
\end{table}

In node classification, omitting \textit{masked node-based positive sampling} (‘w/o MNPS’), where all nodes are used as anchors, results in decreased performance on datasets like Cora, Citeseer, and PubMed, compared to CORE, which exclusively uses masked nodes as anchors. Similarly, without \textit{masked node-based negative sampling} (‘w/o MNNS’), where negative samples are drawn from all nodes instead of just masked nodes, accuracy drops, particularly on datasets like Wiki-CS, Computer, and Photo. The variant ‘w/o both’, which uses all nodes as anchors and negative samples from the entire graph (equivalent to standard node-level GCL), performs better than the other two variants but still falls short of CORE across most datasets. However, CORE slightly underperforms on CS and Physics, possibly due to the large raw feature dimensions of 6,805 and 8,415 in these datasets, which may cause the model to overfit to the masked nodes. Preprocessing techniques like principal component analysis could help address this issue.

For graph classification, omitting either or both design choices similarly results in performance declines across all datasets. These results underscore the value of focusing on masked nodes for generating positive pairs and selecting negative samples, collectively enhancing learning and improving graph representations.

Overall, our ablation studies underscore the strength of using masked nodes as both positive and negative samples in CORE. These design choices are thoroughly validated by our experiments, demonstrating consistent performance improvements across node and graph classification tasks.

\begin{figure}[t]
\centering
\includegraphics[width=0.95\linewidth]{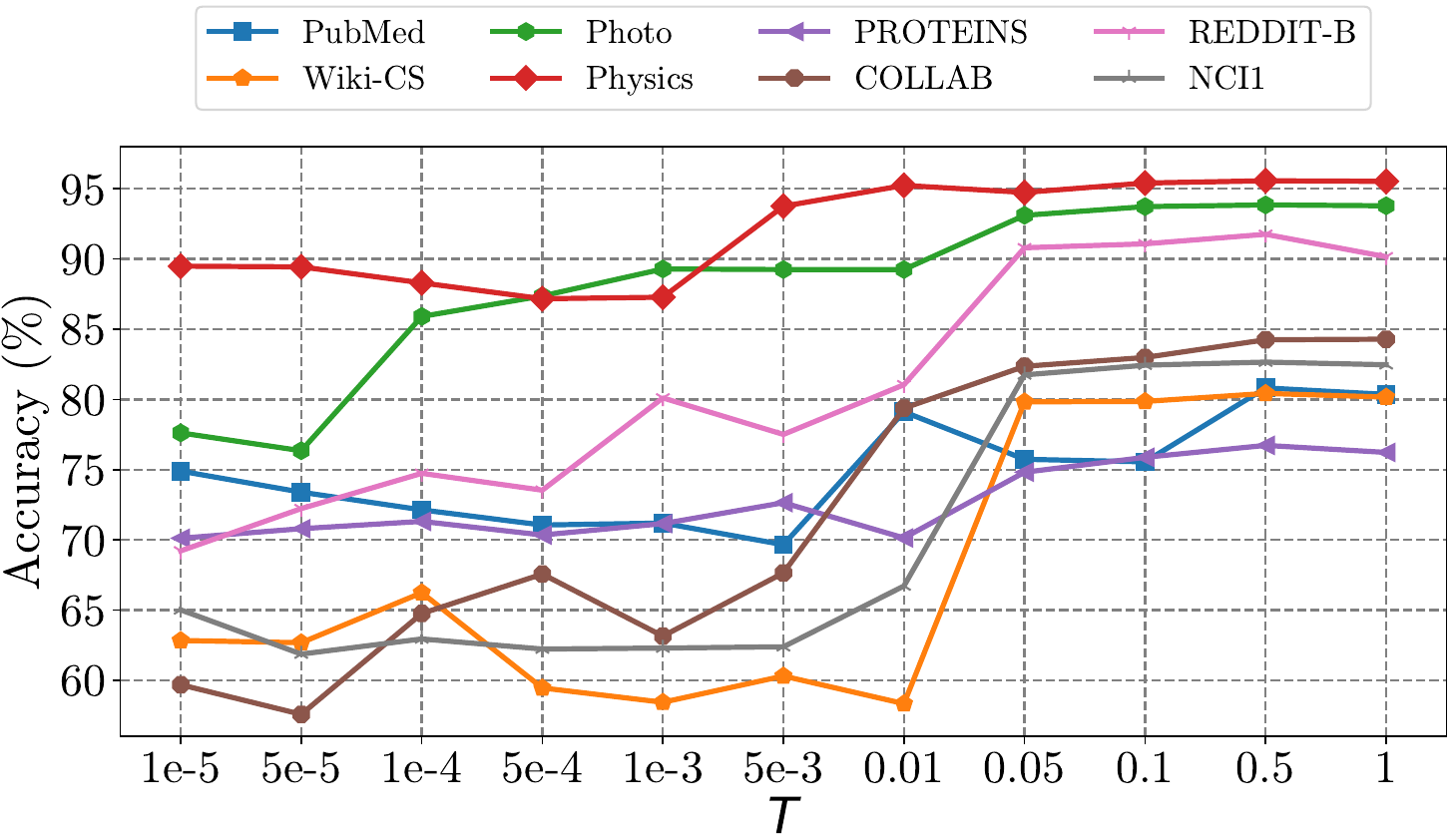}
\caption{Effect of temperature $T$.}
\label{fig:temperatures}
\end{figure}

\stitle{Effect of temperature $T$.} We investigated the effect of the temperature parameter $T$ on model performance, as shown in Fig.~\ref{fig:temperatures}. Lower temperatures reduce the utilization of negative samples, leading the model to focus more on positive pairs and resulting in poorer performance. In contrast, higher temperature generally improves performance by better leveraging negative samples to enhance the model's discriminative capabilities. These findings underscore the importance of negative samples, as guided by theoretical insights, and highlight the value of our framework in seamlessly integrating contrastive learning into MFR. The datasets in Fig.~\ref{fig:temperatures} include four for node classification and four for graph classification, clearly illustrating the observed trends. 



\stitle{Effect of negative sample size $|\mathcal{M}|$.}
\label{sec:num_neg}
We examined the effect of varying the number of negative samples on model performance, with the results presented in Tab.~\ref{tab:negative_samples}. Our results indicate that increasing the number of negative samples generally leads to improved performance, with optimal results varying by dataset. For node classification, the highest accuracy could often be achieved with larger pools of negative samples, as seen in datasets like Citeseer, Cora, and PPI. 

Similarly for graph classification tasks shown in Tab.~\ref{tab:negative_samples_graph}, performance generally improves with more negative samples, particularly in datasets like IMDB-M, PROTEINS, and REDDIT-B, where the highest accuracies are achieved with larger sample sizes (64-128 samples). However, some datasets like COLLAB and MUTAG achieve optimal performance with smaller negative sample sizes (8 and 16 samples respectively), suggesting that the optimal number varies across different graph structures and tasks.

This suggests that CORE scales effectively with the number of negative samples, likely due to our strategy of exclusively selecting masked nodes as negative samples. This approach allows CORE to leverage diverse negative contexts for learning without compromising the quality of the learned representations.

\begin{table}[t]
    \small
    \centering
    \caption{Performance under varying negative sample sizes on node classification.}
    \begin{threeparttable}
    \renewcommand\tabcolsep{5pt}
    \resizebox{\linewidth}{!}{
    \begin{tabular}{@{}c|cccccccccc@{}}
    \toprule[1.2pt]
    $|\mathcal{M}|$ & 0 & 2 & 4 & 8 & 16 & 32 & 64 & 128 & 256 \\
    \midrule
    Cora & 83.2 & 83.5 & 83.5 & 83.7 & 84.0 & 84.2 & 84.4 & 84.3 & \bf 84.5\\
    Citeseer & 73.2 & 73.1 & 73.2 & 73.5 & 73.4 & 73.7 & \bf 73.9 & \bf 73.9 & \bf 73.9\\
    PubMed & 78.8 & 80.1 & 81.1 & 81.0 & 80.8 & 81.2 & \bf 81.5 & 81.0 & 81.0 \\
    Wiki-CS & 80.2 & 80.2 & \bf 80.3 & \bf 80.3 & \bf 80.3 & \bf 80.3 & \bf 80.3 & \bf 80.3 & \bf 80.3 \\
    Computer & \bf 91.2 & 90.8 & 91.1 & 90.1 & \bf 91.2 & \bf 91.2 & \bf 91.2 & 91.0 & \bf 91.2\\
    Photo & 93.7 & 93.7 & 93.7 & 93.6 & 93.7 & \bf 93.9 & 93.8 & 93.8 & 93.8\\
    CS & 93.0 & 93.3 & 93.3 & 93.2 & \bf 93.3 & 93.2 & 93.2 & 93.2 & 93.3 \\
    Physics & 95.3 & \bf 95.6 & \bf 95.6 & \bf 95.6 & 95.5 & 95.5 & 95.5 & 95.5 & \bf 95.6\\
    PPI & 74.4 & 72.1 & 73.7 & 74.2 & 74.7 & 75.0 & 75.1 & 75.2 & \bf 75.4\\
    \bottomrule[1.2pt]
    \end{tabular}
    }
    \end{threeparttable}
    \label{tab:negative_samples}
\end{table}

\begin{table}[t]
    \small
    \centering
    \caption{Performance under varying negative sample sizes on graph classification.}
    \begin{threeparttable}
    \renewcommand\tabcolsep{3pt}
    \renewcommand\arraystretch{1.05}
    \resizebox{1\linewidth}{!}{
    \begin{tabular}{c|ccccccccc}
    \toprule[1.2pt]
    $|\mathcal{M}|$ & 0 & 2 & 4 & 8 & 16 & 32 & 64 & 128 & 256\\
    \midrule
    IMDB-B   & 74.6  & 76.5  & 76.3  & \bf 76.6  & 76.5  & \bf 76.6  & \bf 76.6  & 76.5  & 76.5 \\
    IMDB-M   & 52.2  & 52.7  & 52.7  & 52.8  & 52.7  & 52.6  & \bf 52.9  & 52.5  & 52.6 \\
    PROTEINS & 75.3  & 76.5  & 76.6  & 76.8  & 76.9  & 76.9  & 77.0  & \bf 77.1  & 77.0 \\
    COLLAB   & 83.7  & \bf 84.3  & \bf 84.3  & \bf 84.3  & 84.2  & 83.8  & 84.2  & 84.0  & 83.9 \\
    MUTAG    & 80.2  & 89.7  & 89.6  & 89.5  & \bf 90.0  & 89.2  & 89.4  & 88.8  & 88.1 \\
    REDDIT-B & 78.9  & 91.0  & 91.3  & 91.4  & 91.9  & 91.7  & 91.8  & \bf 92.1  & 91.7 \\
    NCI1     & 76.4  & 81.7  & 82.2  & 82.4  & 82.5  & 83.0  & \bf 83.1  & 82.8  & 82.6 \\

    \bottomrule[1.2pt]
    \end{tabular}
    }
    \end{threeparttable}
    \label{tab:negative_samples_graph}
\end{table}



It is worth noting that setting $T$ near-zero differs from using zero negative samples ($|\mathcal{M}|=0$). With $|\mathcal{M}|=0$, the loss becomes pure reconstruction focusing on positive pairs. In contrast, when $T$ is very small with negative samples present, the negative pairs term still exists but becomes numerically unstable, leading to degraded performance. This explains the different behaviors observed in Fig.~\ref{fig:temperatures} and Tab.~\ref{tab:negative_samples}, and why we use standard temperature values for stable training.

\subsection{Scalability to Large-Scale Graphs}
To demonstrate CORE's scalability, we conducted experiments on two large-scale datasets: ogbn-arxiv (169K nodes) and ogbn-products (2.4M nodes). Following GraphMAE's efficient implementation, we trained and tested on sampled subgraphs, ensuring CORE maintains similar memory efficiency as GraphMAE.

\begin{table}[t]
    \centering
    \small
    \caption{Performance on large-scale graph datasets (node classification accuracy \%).}
    \begin{tabular}{l|cc}
    \toprule[1.2pt]
    Method & ogbn-arxiv & ogbn-products \\
    \midrule
    GraphMAE & 71.03 & 78.89 \\
    + CORE & \textbf{71.17} & \textbf{79.82} \\
    \midrule
    GraphMAE2 & 71.32 & \textbf{80.09} \\
    + CORE & \textbf{71.57} & 79.52 \\
    \bottomrule[1.2pt]
    \end{tabular}
    \label{tab:large_scale}
\end{table}

Table~\ref{tab:large_scale} shows that CORE consistently improves over GraphMAE (+0.14\% to +0.93\%) and maintains competitive performance with GraphMAE2, demonstrating its effectiveness on large graphs.

The computational complexity of CORE is $O(|\widetilde{V}||\mathcal{M}_i|d)$, where $|\widetilde{V}|$ is the number of masked nodes, $|\mathcal{M}_i|$ is the number of negative samples per masked node, and $d$ is the feature dimension. Since both $|\widetilde{V}|$ and $|\mathcal{M}_i|$ are typically much smaller than the total number of nodes $N$, CORE maintains good efficiency even on large graphs.

\begin{table}[t]
    \centering
    \small
    \caption{Training time per epoch (seconds).}
    \begin{tabular}{l|cc}
    \toprule[1.2pt]
    Method & Physics (34K nodes) & ogbn-arxiv (169K nodes) \\
    \midrule
    GraphMAE2 & 0.25 & 0.93 \\
    + CORE & 0.35 & 1.12 \\
    Overhead & +40\% & +20\% \\
    \bottomrule[1.2pt]
    \end{tabular}
    \label{tab:time_comparison}
\end{table}

As shown in Table~\ref{tab:time_comparison}, the computational overhead of CORE decreases with larger graphs (from 40\% on Physics to 20\% on ogbn-arxiv) due to more efficient batch processing with subgraphs. This demonstrates that CORE offers a favorable trade-off between performance gains and computational cost, particularly for large-scale graphs.

\section{Preliminary Analysis}
\label{sec:prelim_analysis}
Before developing our CORE framework, we conducted a preliminary analysis to explore the relationship between masked feature reconstruction and node-level graph contrastive learning. This analysis served to empirically validate our theoretical insights but is distinct from our proposed method.

\begin{figure}[t]
    \centering
    \includegraphics[width=0.6\linewidth]{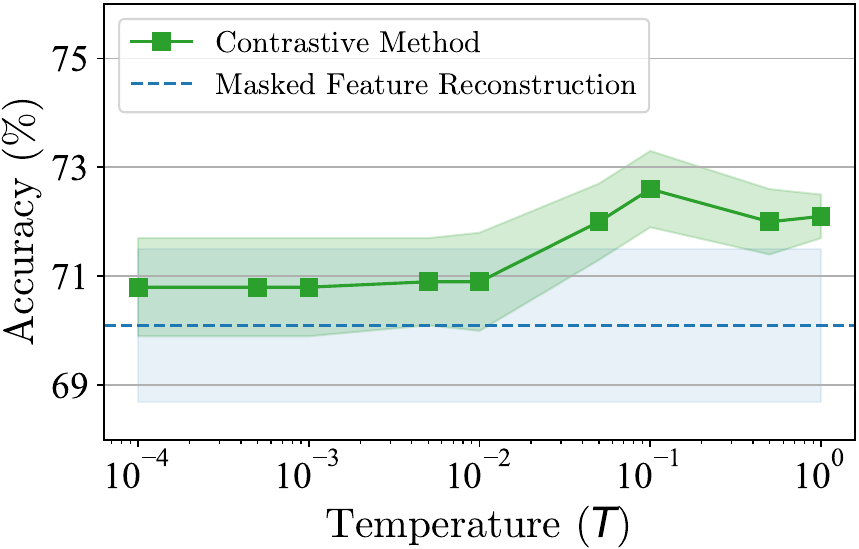}
    \caption{Relationship between temperature in node-level GCL and its performance compared to MFR on Citeseer dataset. Shaded area indicates standard deviation.}
    \label{fig:prelim}
\end{figure}



Figure~\ref{fig:prelim} shows node-level GCL performance approaching that of MFR as temperature decreases on Citeseer. This is \textbf{not our proposed CORE method versus MFR}, but rather empirical validation of the theoretical connection in Section~\ref{sec:theoretical_insights}. Using a simple setup with ``Masking'' and ``No-Operation'' views, the trend confirms that at lower temperatures, positive pairs dominate the contrastive loss, creating an MFR-like scenario without explicit negative samples.

\section{Conclusion}
In this paper, we presented Contrastive Masked Feature Reconstruction (CORE), a novel self-supervised graph learning framework that enhances MFR by integrating it with contrastive learning principles. This integration is motivated by our theoretical insights demonstrating the convergence of MFR and node-level GCL objectives under certain conditions. Building on their compatibility, we developed a framework that leverages masked nodes as both positive and negative samples, preserving the benefits of feature reconstruction while encouraging better contextual understanding and node differentiation. Our extensive experiments validate this approach, demonstrating that CORE consistently outperforms state-of-the-art methods like GraphMAE in both node and graph classification tasks. These findings pave the way for advancements in graph SSL by showing how different strategies can be effectively combined.

\bibliographystyle{ACM-Reference-Format}
\bibliography{core}

\end{document}